\DeclareMathOperator*{\argmin}{argmin}
\newtheorem{theorem}{Theorem}
\newtheorem{lemma}{Lemma}
\title{Random Fourier Features via Fast Surrogate Leverage Weighted Sampling}
\author{Fanghui Liu\textsuperscript{\rm 1, \rm 2}, Xiaolin Huang\textsuperscript{\rm 2, \rm 3}, Yudong Chen\textsuperscript{\rm 4}, Jie Yang\textsuperscript{\rm 2, \rm 3}, Johan A.K. Suykens\textsuperscript{\rm 1}\\
	\textsuperscript{\rm 1}Department of Electrical Engineering (ESAT-STADIUS), KU Leuven, Belgium\\
	\textsuperscript{\rm 2}Institute of Image Processing and Pattern Recognition, Shanghai Jiao Tong University, China\\
	\textsuperscript{\rm 3}Institute of Medical Robotics, Shanghai Jiao Tong University, China\\
	\textsuperscript{\rm 4}School of Operations Research and Information Engineering, Cornell University, USA\\
	fanghui.liu@kuleuven.be, xiaolinhuang@sjtu.edu.cn, yudong.chen@cornell.edu, \\ jieyang@sjtu.edu.cn, johan.suykens@esat.kuleuven.be
}
\begin{document}

\maketitle

\begin{abstract}
In this paper, we propose a fast surrogate leverage weighted sampling strategy to generate refined random Fourier features for kernel approximation.
Compared to the current state-of-the-art method that uses the leverage weighted scheme \cite{li2019towards}, our new strategy is simpler and more effective.
It uses kernel alignment to guide the sampling process and it can avoid the matrix inversion operator when we compute the leverage function.
Given $n$ observations and $s$ random features, our strategy can reduce the time complexity for sampling from $\mathcal{O}(ns^2+s^3)$ to $\mathcal{O}(ns^2)$, while achieving comparable (or even slightly better) prediction performance when applied to kernel ridge regression (KRR).
In addition, we provide theoretical guarantees on the generalization performance of our approach, and in particular characterize the number of random features required to achieve statistical guarantees in KRR.
Experiments on several benchmark datasets demonstrate that our algorithm achieves comparable prediction performance and takes less time cost when compared to \cite{li2019towards}.

\end{abstract}

\section{Introduction}
Kernel methods \cite{Sch2003Learning} are one of the most important and powerful tools in statistical learning. 
However, kernel methods often suffer from scalability issues in large-scale problems due to high space and time complexities.
For example, given $n$ observations in the original $d$-dimensional space $\mathcal{X}$, kernel ridge regression (KRR) requires $\mathcal{O}(n^3)$ training time and $\mathcal{O}(n^2)$ space to store the kernel matrix, which becomes intractable when $ n $ is large. 

One of the most popular approaches for scaling up kernel methods is  random Fourier features (RFF) \cite{rahimi2007random}, which approximates the original kernel by mapping input features into a new space spanned by a small number of Fourier basis.
Specifically, suppose a given kernel $k(\cdot, \cdot): \mathcal{X} \times \mathcal{X} \rightarrow \mathbb{R}$ satisfies 1) positive definiteness and 2) shift-invariance, i.e., $k(\bm x, \bm x') = k(\bm x - \bm x')$. By Bochner's theorem \cite{bochner2005harmonic}, there exists a finite Borel measure $p(\bm w)$ (the Fourier transform associated with $k$) such that
\begin{equation}\label{rffdef}
\begin{split}
  k(\bm x- \bm x') & = \int_{\mathbb{R}^d} p(\bm w) \exp\big(\mbox{i}\bm w^{\!\top}(\bm x- \bm x')\big) \mbox{d} \bm w \,.
  \end{split}
\end{equation}
(Typically, the kernel is real-valued and thus the imaginary part in Eq.~\eqref{rffdef} can be discarded.)
One can then use Monte Carlo sampling to approximate $k(\bm x , \bm x')$ by the low-dimensional kernel $\tilde{k}_p(\bm x , \bm x') = \varphi_p(\bm x)^{\!\top} \varphi_p(\bm x') $ with the explicit mapping 
\begin{equation}\label{mapping}
  \varphi_p(\bm x) \! := \! \frac{1}{\sqrt{s}}
  \big[\exp(-\mbox{i}\bm w^{\!\top}_1 \bm x), \cdots,\exp(-\mbox{i}\bm w^{\!\top}_s \bm x)]^{\!\top}\,,
\end{equation}
where $\{ \bm w_i\}_{i=1}^s$ are sampled from $p(\bm w)$ \emph{independently} of the training set.
For notational simplicity, here we write $z_p(\bm w_i, \bm x_j) := 1/\sqrt{s} \exp(-\mbox{i}\bm w^{\!\top}_i \bm x_j)$ such that $\varphi_p(\bm x)=[z_p(\bm w_1, \bm x), \cdots, z_p(\bm w_s, \bm x)]^{\!\top}$.
Note that we have $k(\bm x, \bm x') = \mathbb{E}_{\bm w \sim p} [\varphi_p( \bm x)^{\!\top} \varphi_p(\bm x')] \approx \tilde{k}_p(\bm x , \bm x') = \sum_{i=1}^{s} z_p(\bm w_i, \bm x) z_p(\bm w_i, \bm x')$.
Consequently, the original kernel matrix $\bm K = [k(\bm x_i, \bm x_j)]_{n \times n} $ on the $n$ observations $\bm X = \{  \bm x_i \}_{i=1}^n $ can be approximated by $\bm K \approx \tilde{\bm K}_p =\bm Z_p \bm Z_p^{\! \top} $, where  $\bm Z_p =[\varphi_p(\bm x_1), \cdots, \varphi_p(\bm x_n)]^{\!\top} \in \mathbb{R}^{n \times s}$.
With $ s $ random features, this approximation applied to KRR only requires $\mathcal{O}(ns^2)$ time and $\mathcal{O}(ns)$ memory, hence achieving  a substantial computational saving when $s \ll n$.

Since RFF uses the Monte Carlo estimates that are independent of the training set, a large number of random features are often required to achieve competitive approximation and generalization performance.
To improve performance, recent works \cite{sun2018but,li2019towards} consider using the ridge leverage function \cite{bach2017equivalence,avron2017random} defined with respect to the training data.
For a given random feature $\bm w_i$, this function is defined as
\begin{equation}\label{llambda}
l_{\lambda}(\bm w_i)=p(\bm w_i) \bm z^{\!\top}_{p,\bm{w}_i}(\bm{X})(\bm{K}+n \lambda \bm{I})^{-1} \bm z_{p,\bm{w}_i}(\bm{X})\,,
\end{equation}
where $\lambda$ is the regularization parameter in KRR and $\bm z_{p,\bm{w}_i}(\bm{X}) \in \mathbb{R}^n$ is the $i$-th column of $\bm Z_p$ given by $(\bm z_{p,\bm{w}_i}(\bm{X}))_j  := z_{p}(\bm{w}_i,\bm{x}_j) $.
Observe that $q^*(\bm w):= \frac{l_{\lambda}(\bm w) }{\int l_{\lambda}(\bm w) \text{d} \bm w }$ can be viewed as a probability density function,
and hence is referred to as the \emph{Empirical Ridge Leverage Score} (ERLS) distribution \cite{avron2017random}. Therefore, one can sample the features $\{ \bm w_i\}_{i=1}^s$ according to $q^*(\bm w)$, which is an importance weighted sampling strategy.
Compared to standard Monte Carlo sampling for RFF, $q^*(\bm w)$-based sampling requires fewer Fourier features and enjoys theoretical guarantees \cite{avron2017random,li2019towards}.

However, computing the ridge leverage scores and the ERLS distribution may be intractable when $n$ is large, as we need to invert the kernel matrix in Eq.~\eqref{llambda}.
An alternative way in \cite{sun2018but,li2019towards} is to use the subset of data to approximate $\bm K$, but this scheme still needs $\mathcal{O}(ns^2+s^3)$ time to generate random features.
To address these computational difficulties, we design a simple but effective leverage function to replace the original one.
For a given $\bm w$, our leverage function is defined as
\begin{equation}\label{lw}
L_{\lambda}(\bm w)=p(\bm w) \bm z^{\!\top}_{p,\bm{w}}(\bm{X}) \left( \frac{1}{n^2 \lambda} \Big(\bm y \bm y^{\!\top}+n \bm I\Big) \right) \bm z_{p,\bm{w}}(\bm{X})\,,
\end{equation}
where the matrix $\bm y \bm y^{\!\top}$ is an ideal kernel that directly fits the training data with 100\% accuracy in classification tasks, and thus can be used to guide kernel learning tasks as in kernel alignment \cite{Cortes2012Algorithms}.
It can be found that, our \emph{surrogate} function avoids the matrix inversion operator so as to further accelerate kernel approximation.
Note that, we introduce the additional term $n \bm I$ and the coefficient $1/(n^2 \lambda)$ in Eq.~\eqref{lw} to ensure, $ L_{\lambda} $ is a  \emph{surrogate} function that upper bounds $ l_{\lambda} $ in Eq.~\eqref{llambda} for theoretical guarantees.
This can be achieved due to $L_{\lambda}(\bm w_i) \ge l_{\lambda}(\bm w_i)  $\footnote{It holds by $(\bm{K} + n \lambda \bm{I})^{-1} \preceq  (n \lambda \bm{I})^{-1} \preceq \frac{1}{n^2 \lambda} (\bm y \bm y^{\!\top}+n \bm I ) $, where the notation $0 \preceq \bm A$ denotes that $\bm A$ is semi-positive definite.}.
In this way, our method with the \emph{surrogate} function requires less computational time while achieving comparable generalization performance, as demonstrated by our theoretical results and experimental validations.

Specifically, the main contributions of this paper are:
\begin{itemize}
  \item We propose a surrogate ridge leverage function based on kernel alignment and derive its associated fast surrogate ERLS distribution. This distribution is simple in form and has intuitive physical meanings.
      Our theoretical analysis provides a lower bound on the number of random features that guarantees no loss in the learning accuracy in KRR.
  \item By sampling from the surrogate ERLS distribution, our \emph{data-dependent} algorithm takes $\mathcal{O}(ns^2)$ time to generate random features, which is the same as RFF and less than the $\mathcal{O}(ns^2+s^3)$ time in \cite{li2019towards}. We further provide theoretical guarantees on the generalization performance of our algorithm equipped with the KRR estimator.
  \item Experiments on various benchmark datasets demonstrate that our method performs better than standard random Fourier features based algorithms. Specifically, our algorithm achieves comparable (or even better) accuracy and uses less time when compared to \cite{li2019towards}.
\end{itemize}
The remainder of the paper is organized as follows. Section~\ref{sec:pre} briefly reviews the related work on random features for kernel approximation.
Our surrogate leverage weighted sampling strategy for RFF is presented in Section \ref{sec:model}, and related theoretical results are given in Section~\ref{sec:theo}.
In section \ref{sec:exp}, we provide experimental evaluation for our algorithm and compare with other representative random features based methods on popular benchmarks.
The paper is concluded in Section \ref{sec:conclusion}.

\section{Related Works}
\label{sec:pre}

Recent research on random Fourier features focuses on constructing the mapping
\begin{equation*}\label{phiweight}
   \varphi(\bm x)  := \frac{1}{\sqrt{s}}
   \big[a_1 \exp(-\mbox{i}\bm w^{\!\top}_1 \bm x), \cdots, a_s\exp(-\mbox{i}\bm w^{\!\top}_s \bm x)]^{\!\top}\,.
\end{equation*}
The key question is how to select the points $\bm w_i$ and weights $a_i$ so as to uniformly approximate the integral in Eq.~\eqref{rffdef}.
In standard RFF, $\{ \bm w_i \}_{i=1}^s$ are randomly sampled from $p(\bm w)$ and the weights are equal, i.e., $a_i \equiv  1$.
To reduce the approximation variance, \citeauthor{Yu2016Orthogonal} \shortcite{Yu2016Orthogonal} proposes the orthogonal random features (ORF) approach, which incorporates an orthogonality constraint  when sampling $\{ \bm w_i \}_{i=1}^s $ from $p(\bm w)$.
Sampling theory \citeauthor{niederreiter1992random} \shortcite{niederreiter1992random} suggests that the convergence of Monte-Carlo used in RFF and ORF can be significantly improved by choosing a deterministic sequence $\{ \bm w_i \} $ instead of sampling randomly.
Therefore, a possible middle-ground method is Quasi-Monte Carlo sampling \cite{Avron2016Quasi},  which uses a low-discrepancy sequence $\{ \bm w_i \}$ rather than the fully random Monte Carlo samples.
Other deterministic approaches based on numerical quadrature are considered in \cite{evans1993practical}.
\citeauthor{bach2017equivalence} \shortcite{bach2017equivalence} analyzes the relationship between random features and quadrature, which allows one to use deterministic numerical integration methods such as Gaussian quadrature \cite{dao2017gaussian}, spherical-radial quadrature rules \cite{munkhoeva2018quadrature}, and sparse quadratures \cite{GauthierSIAM2018} for kernel approximation.

The above methods are all \emph{data-independent}, i.e., the selection of points and weights is independent of the training data.
Another line of work considers \emph{data-dependent} algorithms, which use the training data to guide the generation of random Fourier features by using, e.g., kernel alignment \cite{AmanNIPS2016}, feature compression \cite{agrawal2019data}, or the ridge leverage function \cite{avron2017random,sun2018but,li2019towards,fanuel2019nystr}.
Since our method builds on the leverage function $l_{\lambda}(\bm w)$,  we detail this approach here.
From Eq.~\eqref{llambda}, the integral of $l_{\lambda}(\bm w)$ is
\begin{equation}\label{dklambda}
  \int_{\mathbb{R}^d} l_{\lambda}(\bm w) \mbox{d} \bm w = \operatorname{Tr}\left[\bm{K}(\bm{K}+n \lambda \bm{I})^{-1}\right] =: d_{\bm{K}}^{\lambda}\,.
\end{equation}
The quantity $d_{\bm{K}}^{\lambda} \ll n$ determines the number of independent parameters in a learning problem and hence is referred to as the \emph{number of effective degrees of freedom} \cite{bach2013sharp}.
\citeauthor{li2019towards} \shortcite{li2019towards} provides the sharpest bound on the required number of random features; in particular, with $\Omega(\sqrt{n} \log d_{\bm{K}}^{\lambda})$ features, no loss is incurred in  learning accuracy of kernel ridge regression.
Albeit elegant, sampling according to $q^*(\bm w)$ is often intractable in practice. The alternative approach proposed in \cite{li2019towards} takes $\mathcal{O}(ns^2+s^3)$ time, which is larger than $\mathcal{O}(ns^2)$ in the standard RFF.

\section{Surrogate Leverage Weighted RFF}
\label{sec:model}

\subsection{Problem setting}

Consider a standard supervised learning setting, where $\mathcal{X}$ is a compact metric space of features, and $\mathcal{Y} \subseteq \mathbb{R}$ (in regression) or $\mathcal{Y}=\{ -1, 1\}$ (in classification) is the label space. We assume that a sample set $\{  (\bm x_i, y_i) \}_{i=1}^n $ is drawn from a non-degenerate Borel probability measure $\rho$ on $\mathcal{X} \times \mathcal{Y}$.
The \emph{target function} of $\rho$ is defined by
$f_{\rho}(\bm x) := \int_\mathcal{Y} y \mathrm{d} \rho(y|\bm x)$ for each $\bm x \in \mathcal{X}$, where $\rho(\cdot|\bm x)$ is the conditional distribution of $\rho$ at $\bm x $.
Given a kernel function $k$ and its associated reproducing kernel Hilbert space (RKHS) $\mathcal{H}$, the goal is to find a hypothesis 
$f: \mathcal{X} \rightarrow \mathcal{Y}$ in $\mathcal{H}$ such that  $f(\bm x)$ is a good estimate of the label $y \in \mathcal{Y}$ for a new instance $\bm x \in \mathcal{X}$.
By virtue of the representer theorem \cite{Sch2003Learning}, an empirical risk minimization problem can be formulated as
\begin{equation}\label{ermp}
\hat{f}^{\lambda} :=\underset{f \in \mathcal{H}}{\argmin }~ \frac{1}{n} \sum_{i=1}^{n} \ell \left(y_{i},f(\bm x_i)\right)+\lambda \bm \alpha^{\!\top} \bm{K} \bm \alpha\,,
\end{equation}
where $f = \sum_{i=1}^{n} \alpha_i k(\bm x_i, \cdot)$ with $\bm \alpha \in \mathbb{R}^n$ and the convex loss $\ell: \mathcal{Y} \times \mathcal{Y} \rightarrow \mathbb{R}$ quantifies the quality of the estimate $f(\bm x)$ w.r.t.\ the true $y $.
In this paper, we focus on learning with the squared loss, i.e., $\ell(y, f(\bm x)) = (y - f(\bm x))^2$.
Hence, the expected risk in KRR is defined as $\mathcal{E}(f) = \int_{\mathcal{X} \times \mathcal{Y}} (f(\bm x) - y)^2 \mathrm{d} \rho$, with the corresponding empirical risk defined on the sample, i.e., $\hat{\mathcal{E}}(f) = \frac{1}{n} \sum_{i=1}^{n} \big(f( \bm x_i) - y_i \big)^2$.
In standard learning theory, the optimal hypothesis $f_{\rho}$ is defined as $f_{\rho}(\bm x) = \int_Y y \mathrm{d} \rho(y|\bm x), \quad \bm x \in \mathcal{X}$,
where $\rho(\cdot|\bm x)$ is the conditional distribution of $\rho$ at $\bm x \in \mathcal{X}$. The regularization parameter $\lambda$ in Eq.~\eqref{ermp} should depend on the sample size; in particular, $\lambda \equiv \lambda(n)$ with $\lim_{n \rightarrow \infty} \lambda(n) = 0$.
Following \cite{Rudi2017Generalization,li2019towards}, we pick $\lambda \in \mathcal{O}(n^{-1/2})$.

As shown in \cite{li2019towards}, when using random features, the empirical risk minimization problem~\eqref{ermp} can be expressed as
\begin{equation}\label{ermrff}
\bm \beta_{\lambda} :=\underset{\bm \beta \in \mathbb{R}^{s}}{\argmin } ~\frac{1}{n}\left\|\bm y- \bm {Z}_{q} \bm \beta\right\|_{2}^{2}+\lambda \|\bm \beta\|_{2}^{2}\,,
\end{equation}
where $\bm y = [y_1, y_2, \cdots, y_n]^{\!\top}$ is the label vector and  $\bm Z_q =[\varphi_q(\bm x_1), \cdots, \varphi_q(\bm x_n)]^{\!\top} \in \mathbb{R}^{n \times s}$ is the random feature matrix, with $ \varphi_q(\bm x) $ as defined in Eq.~\eqref{mapping} and $\{ \bm w_i \}_{i=1}^s$ sampled from a distribution $q(\bm w)$.
Eq.~\eqref{ermrff} is a linear ridge regression problem in the space spanned by the random features \cite{suykens2002least,mall2015very}, and the optimal hypothesis is given by $\bm f_{\bm \beta}^{\lambda} = \bm Z_q \bm \beta_{\lambda}$, with
\begin{equation}\label{dualpot}
  \bm \beta_{\lambda} = (\bm Z_q^{\!\top} \bm Z_q + n \lambda \bm I)^{-1} \bm Z_q^{\!\top} \bm y\,.
\end{equation}

Note that the distribution $q(\bm w)$ determines the feature mapping matrix and hence has a significant impact on the generalization performance.
Our main goal in the sequel is to design a good $q(\bm w)$, and to understand the relationship between $q(\bm w)$ and the expected risk.
In particular, we would like to characterize the number $ s $ of random features needed when sampling from $q(\bm w)$  in order to achieve a certain convergence rate of the risk.

\subsection{Surrogate leverage weighted sampling}
Let $q(\bm w)$ be a probability density function to be designed. Given the points $\{ \bm w_i \}_{i=1}^s$ sampled from $q(\bm w)$, we define the mapping
\begin{equation}\label{fmaping}
\varphi_q(\bm x) = \frac{1}{\sqrt{s}} \left(\sqrt{\frac{p\left(\bm w_1\right)}{q\left(\bm w_1\right)}}  e^{-\mbox{i}\bm w^{\!\top}_1 \bm x}, \cdots, \sqrt{\frac{p\left(\bm w_s\right)}{q\left(\bm w_s\right)}}  e^{-\mbox{i}\bm w^{\!\top}_s \bm x}\right)^{\!\!\top}\,.
\end{equation}
We again have $k(\bm x, \bm x')=\mathbb{E}_{\bm{w} \sim q } [\varphi_q( \bm x)^{\!\top} \varphi_q(\bm x')] \approx \tilde{k}_q(\bm x, \bm x') = \sum_{i=1}^{s} z_q(\bm w_i, \bm x) z_q(\bm w_i, \bm x')$, where $z_q(\bm w_i, \bm x_j) := \sqrt{p(\bm w_i)/ q(\bm w_i)} z_p(\bm w_i, \bm x_j)$.
Accordingly, the kernel matrix $\bm K$ can be approximated by $\bm K_q = \bm Z_q \bm Z_q^{\!\top}$, where $\bm Z_q :=[\varphi_q(\bm x_1), \cdots, \varphi_q(\bm x_n)]^{\!\top} \in \mathbb{R}^{n \times s}$.
Denoting by $\bm z_{q,\bm w_i}(\bm X)$  the $i$-th column of $\bm Z_q$,
we have $\bm K = \mathbb{E}_{\bm w \sim p}[\bm z_{p,\bm w}(\bm X) \bm z^{\!\top}_{p,\bm w}(\bm X)]=\mathbb{E}_{\bm w \sim q}[\bm z_{q,\bm w}(\bm X) \bm z^{\!\top}_{q,\bm w}(\bm X)]$.
Note that this scheme can be regarded as a form of importance sampling.

Our surrogate empirical ridge leverage score distribution $L_{\lambda}(\bm w)$ is given by Eq.~\eqref{lw}.
The integral of $L_{\lambda}(\bm w)$ is
\begin{equation}\label{lwour}
   \int_{\mathbb{R}^d} L_{\lambda}(\bm w) \mbox{d} \bm w = \frac{1}{n^2 \lambda}\operatorname{Tr}\left[\big(\bm y \bm y^{\!\top} \!\! + n \bm I \big) \bm{K}\right] :=D^{\lambda}_{\bm{K}}\,.
\end{equation}
Combining Eq.~\eqref{lw} and Eq.~\eqref{lwour}, we can compute the surrogate empirical ridge leverage score distribution by
\begin{equation}\label{serls}
  q(\bm w) := \frac{L_{\lambda}(\bm w) }{\int_{\mathbb{R}^d} L_{\lambda}(\bm w) \text{d} \bm w } = \frac{L_{\lambda}(\bm w)}{D^{\lambda}_{\bm{K}}}\,.
\end{equation}
The random features $\{ \bm w_i \}_{i=1}^s$ can then be sampled from the above $q(\bm w)$. We refer to this sampling strategy as \emph{surrogate leverage weighted RFF}.
Compared to the standard $ l_{\lambda} $ and its associated ERLS distribution, the proposed $L_{\lambda}(\bm w)$ and $D^{\lambda}_{\bm{K}}$ are simpler: it does not require inverting the kernel matrix and thus accelerates the generation of random features.

Since the distribution $q(\bm w)$ involves the kernel matrix $\bm K$ that is defined on the entire training dataset, we need to approximate $ \bm K $ by random features, and then calculate/approximate $ q(\bm w) $.
To be specific, we firstly sample $\{ \bm w_i \}_{i=1}^l$ with $l \geq s$ from the spectral measure $p(\bm w)$ and form the feature matrix $\bm Z_l \in \mathbb{R}^{n \times l}$.
We have $\bm K \approx  \tilde{\bm K}_p = \bm Z_l \bm Z_l^{\!\top}$, and thus the distribution $q(\bm w)$ can be approximated by
\begin{equation}\label{qtildewnew}
  \tilde{q}(\bm w) = \frac{p(\bm w) \bm z^{\!\top}_{p,\bm{w}}(\bm{X}) \left( \bm y \bm y^{\!\top}+n \bm I \right) \bm z_{p,\bm{w}}(\bm{X})}{ \| \bm y^{\!\top} \! \bm Z_l \|_2^2 + n \| \bm Z_l \|^2_{\text{F}}  }\,.
\end{equation}
Hence, we can then sample from $\tilde{q}(\bm w)$ to generate the refined random features by importance sampling.

Note that the term $n \bm I$ in Eq.~\eqref{lw} and Eq.~\eqref{qtildewnew} is independent of the sample set $\bm X$. 
If we discard this term in our algorithm implementation, $L_{\lambda}(\bm w)$ in Eq.~\eqref{lw} can be transformed as
\begin{equation}\label{lwapp}
  L'_{\lambda}(\bm w)=p(\bm w) \bm z^{\!\top}_{p,\bm{w}}(\bm{X}) \left( \frac{1}{n^2 \lambda} \bm y \bm y^{\!\top} \right) \bm z_{p,\bm{w}}(\bm{X})\,,
\end{equation}
 and further $\tilde{q}(\bm w)$ in Eq.~\eqref{qtildewnew} can be simplified to
\begin{equation}\label{qtildew}
  \tilde{q}'(\bm w) = \frac{p(\bm w) \bm z^{\!\top}_{p,\bm{w}}(\bm{X}) \left( \bm y \bm y^{\!\top} \right) \bm z_{p,\bm{w}}(\bm{X})}{ \| \bm y^{\!\top} \! \bm Z_l \|_2^2   }\,.
\end{equation}
For each feature $\bm w_i \sim \tilde{q}'(\bm w)$, its re-sampling probability $p_i$ is associated with the approximate empirical ridge leverage score in Eq.~\eqref{lwapp}.
To be specific, it can be represented as
\begin{equation}\label{samplingp}
p_i \propto \left(\bm y^{\!\top} (\bm Z_l)_i\right)^2 = \bigg|\sum_{j=1}^{n} y_j z_{p}(\bm{w}_i,\bm{x}_j)  \bigg|^2 \,, \quad i=1,2,\cdots,l\,.
\end{equation}
It has intuitive physical meanings.
From Eq.~\eqref{samplingp}, it measures the correlation between the label $y_j$ and the mapping output $z_{p}(\bm{w}_i,\bm{x}_j)$.
Therefore, $ p_i $ quantifies the contribution of $\bm w_i$, which defines the $i$-th dimension of the feature mapping $\varphi(\cdot)$, for fitting the training data.
In this view, if $ p_i $ is large, $\bm w_i$ is more important than the other features, and will be given the priority in the above importance sampling scheme.
Based on this, we re-sample $s$ features from $\{ \bm w \}_{i=1}^l$ to generate the refined random features. 
Our surrogate leverage weighted RFF algorithm applied to KRR is summarized in Algorithm~\ref{ago3}.

Also note that if the following condition holds
\begin{equation*}
  \frac{\bm y^{\!\top} \! \sum_{j=1}^{n}(\bm Z_s)_j (\bm Z_s)^{\!\top}_j \bm y}{\bm y^{\!\top} (\bm Z_s)_i (\bm Z_s)^{\!\top}_i \bm y} \! \approx \! \frac{\sum_{j=1}^{n} \| (\bm Z_s)_j \|_2^2}{\| (\bm Z_s)_i \|_2^2}\,,
\end{equation*}
then sampling from $\tilde{q}(\bm w)$ or $\tilde{q}'(\bm w)$ does not have distinct differences.
This condition is satisfied if $\| (\bm Z_s)_i \|_2$ does not dramatically fluctuate for each column.
 in which sampling from $\tilde{q}(\bm w)$ or $\tilde{q}'(\bm w)$ may be used. 

\begin{algorithm}[t]
\caption{The Surrogate Leverage Weighted RFF Algorithm in KRR}
\label{ago3}
\KwIn{the training data $\{  (\bm x_i, y_i) \}_{i=1}^n $, the shift-invariant kernel $k$, the number of random features $s$, and the regularization parameter $\lambda$}
\KwOut{the random feature mapping $\varphi(\cdot)$ and the optimization variable $\bm \beta_{\lambda}$ in KRR}
Sample random features $\{ \bm w_i \}_{i=1}^l$ from $p(\bm w)$ with $l \geq s$, and form the feature matrix $\bm Z_l \in \mathbb{R}^{n \times l}$. \\
associate with each feature $\bm w_i$ a real number $p_i$ such that $p_i$ is proportional to
\begin{equation*}
	p_i \propto \left(\bm y^{\!\top} (\bm Z_l)_i\right)^2 \,, \quad i=1,2,\cdots,l\,.
\end{equation*}
\\
Re-sample $s$ features from $\{ \bm w_i \}_{i=1}^l$ using the multinomial distribution given by the vector $(p_1/L, p_2/L, \cdots, p_l/L)$ with $L \leftarrow \sum_{i=1}^{l} p_i$.\\
Create the feature mapping $\varphi(\bm x)$ for an example $\bm x$ by Eq.~\eqref{fmaping}.\\
Obtain $\bm \beta_{\lambda}$ solved by Eq.~\eqref{dualpot}.\\
\end{algorithm}

The method in \cite{li2019towards} samples $\{ \bm w_i \}_{i=1}^s$ from $q^*(\bm w):=l_{\lambda}(\bm w) / d_{\bm{K}}^{\lambda}$, while ours samples from $q(\bm w):= L_{\lambda}(\bm w) / D_{\bm{K}}^{\lambda}$.
In comparison, our surrogate ERLS distribution is much simpler as it avoids inverting the matrix $\bm Z_s^{\!\top} \bm Z_s$.
Hence, generating $ s $ random features by Algorithm~\ref{ago3} takes $\mathcal{O}(ns^2)$ time to do the sampling.
It is the same as the standard RFF and less than the $\mathcal{O}(ns^2+s^3)$ time needed by \cite{li2019towards} which requires $\mathcal{O}(ns^2)$ for the multiplication of $\bm Z_s^{\!\top} \bm Z_s$ and  $\mathcal{O}(s^3)$ for inverting $\bm Z_s^{\!\top} \bm Z_s$.

\section{Theoretical Analysis}
\label{sec:theo}
In this section, we analyze the generalization properties of kernel ridge regression when using random Fourier features sampled from our $q(\bm w)$.
Our analysis includes two parts. We first study how many features sampled from $q(\bm w)$ are needed to incur no loss of learning accuracy in KRR.
We then characterize the convergence rate of the expected risk of KRR when combined with Algorithm~\ref{ago3}.
Our proofs follow the framework in \cite{li2019towards} and in particular involve the same set of assumptions.

\subsection{Expected risk for sampling from $q(\mathbf{w})$}

The theorem below characterizes the relationship between the expected risk in KRR and the total number of random features used.

\begin{theorem}\label{maintheo}
Given a shift-invariant and positive definite kernel function $k$, denote the eigenvalues of the kernel matrix $\bm K$ as $\lambda_1 \geq \cdots \geq \lambda_n$.
Suppose that the regularization parameter $\lambda$ satisfies $0 \leq n \lambda \leq \lambda_1$, $|y| \leq y_0$ is bounded with $y_0 > 0$, and $\{ \bm w_i \}_{i=1}^s$ are sampled independently from the surrogate empirical ridge leverage score distribution $q(\bm w) = {L_{\lambda}(\bm w)}/{D^{\lambda}_{\bm{K}}}$. If the unit ball of $\mathcal{H}$ contains the optimal hypothesis $f_{\rho}$ and
\begin{equation*}
s \geq 5 D^{\lambda}_{\bm{K}} \log \left(16 d_{\bm{K}}^{\lambda}\right) / \delta\,,
\end{equation*}
then for $0 < \delta <1$, with probability $1-\delta$, the excess risk of $f_{\bm \beta}^\lambda$ can be upper bounded as
\begin{equation}
\mathcal{E}\left(f_{\bm \beta}^{\lambda}\right)-\mathcal{E}\left(f_{\rho}\right) \leq 2 \lambda+\mathcal{O}(1 / \sqrt{n})+\mathcal{E}\big(\hat{f}^{\lambda}\big)-\mathcal{E}\left(f_{\rho}\right) \,,
\end{equation}
where $\mathcal{E}\big(\hat{f}^{\lambda}\big)-\mathcal{E}\left(f_{\rho}\right) $ is the excess risk for the standard kernel ridge regression estimator.
\end{theorem}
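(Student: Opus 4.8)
\emph{Proof plan.} The plan is to follow the two–stage template of \cite{li2019towards}: first establish a high–probability \emph{spectral approximation} of the regularized kernel matrix by its random–feature counterpart, then feed this into a \emph{deterministic} risk bound that yields the stated excess–risk inequality. The only point at which our surrogate distribution $q(\bm w)=L_{\lambda}(\bm w)/D^{\lambda}_{\bm K}$ enters is the almost–sure bound used in the concentration step; everything afterwards is distribution–agnostic, so the second stage can be imported essentially verbatim from \cite{li2019towards}.

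\emph{Stage 1 (spectral approximation).} Set $\bm u_{\bm w}:=(\bm K+n\lambda\bm I)^{-1/2}\bm z_{q,\bm w}(\bm X)\in\mathbb{R}^n$, so that $\mathbb{E}_{\bm w\sim q}[\bm u_{\bm w}\bm u_{\bm w}^{\!\top}]=(\bm K+n\lambda\bm I)^{-1/2}\bm K(\bm K+n\lambda\bm I)^{-1/2}=:\bm\Sigma$, with $\operatorname{Tr}(\bm\Sigma)=d^{\lambda}_{\bm K}$ and, by the hypothesis $n\lambda\le\lambda_1$, $\tfrac12\le\|\bm\Sigma\|\le 1$. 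The key computation is the uniform bound
\[
\|\bm u_{\bm w}\|_2^2=\frac{p(\bm w)}{q(\bm w)}\,\bm z_{p,\bm w}^{\!\top}(\bm X)(\bm K+n\lambda\bm I)^{-1}\bm z_{p,\bm w}(\bm X)=\frac{l_{\lambda}(\bm w)}{q(\bm w)}=D^{\lambda}_{\bm K}\,\frac{l_{\lambda}(\bm w)}{L_{\lambda}(\bm w)}\le D^{\lambda}_{\bm K},
\]
where the last inequality is exactly the surrogate property $l_{\lambda}\le L_{\lambda}$ (the PSD domination in the footnote after Eq.~\eqref{lw}). Since $\tfrac1s\sum_{i=1}^{s}\bm u_{\bm w_i}\bm u_{\bm w_i}^{\!\top}-\bm\Sigma$ is an average of i.i.d.\ centered matrices with per–term operator norm $\le D^{\lambda}_{\bm K}$ and variance proxy of intrinsic dimension $\operatorname{Tr}(\bm\Sigma)/\|\bm\Sigma\|\le 2d^{\lambda}_{\bm K}$, an intrinsic–dimension matrix concentration bound on the expected deviation, combined with Markov's inequality (which is what produces the $1/\delta$ rather than $\log(1/\delta)$ dependence), shows that once $s\ge 5D^{\lambda}_{\bm K}\log(16d^{\lambda}_{\bm K})/\delta$ we have $\big\|\tfrac1s\sum_i\bm u_{\bm w_i}\bm u_{\bm w_i}^{\!\top}-\bm\Sigma\big\|\le\tfrac12$ with probability $1-\delta$; equivalently $\tfrac12(\bm K+n\lambda\bm I)\preceq \tilde{\bm K}_q+n\lambda\bm I\preceq \tfrac32(\bm K+n\lambda\bm I)$.

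\emph{Stage 2 (risk bound).} Condition on this event. Split $\mathcal E(f_{\bm\beta}^{\lambda})-\mathcal E(f_\rho)=\big[\mathcal E(f_{\bm\beta}^{\lambda})-\mathcal E(\hat f^{\lambda})\big]+\big[\mathcal E(\hat f^{\lambda})-\mathcal E(f_\rho)\big]$, keeping the second bracket (exact KRR) as the last term of the claim. For the first bracket, the spectral approximation guarantees that the span of the sampled features contains a proxy for $\hat f^{\lambda}$ whose regularized empirical objective is only $\mathcal O(\lambda)$ larger; comparing this proxy with $f_{\bm\beta}^{\lambda}$ (the minimizer of Eq.~\eqref{ermrff}) contributes the $2\lambda$ bias term, while a standard empirical–to–expected risk comparison over the relevant ball—using $|y|\le y_0$ and $\lambda\in\mathcal O(n^{-1/2})$ together with $f_\rho$ in the unit ball of $\mathcal H$—contributes the $\mathcal O(1/\sqrt n)$ term. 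This is precisely the deterministic part of the analysis in \cite{li2019towards}, which applies unchanged here. Adjusting constants and the failure probability then gives the theorem.

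\emph{Main obstacle.} The one genuinely new ingredient, and the place to be careful, is the uniform norm bound $\|\bm u_{\bm w}\|_2^2\le D^{\lambda}_{\bm K}$: it is what replaces the sharper $d^{\lambda}_{\bm K}$ of \cite{li2019towards} by $D^{\lambda}_{\bm K}$ in the feature count, and it rests on the identity $\|\bm u_{\bm w}\|_2^2=l_{\lambda}(\bm w)/q(\bm w)$ together with the pointwise domination $l_{\lambda}\le L_{\lambda}$. One must also verify that $q$ is a legitimate importance–sampling density on the support of $l_{\lambda}$ (immediate from $L_{\lambda}\ge l_{\lambda}\ge 0$) and track the numerical constants through the concentration/Markov step so that the threshold comes out exactly as $5D^{\lambda}_{\bm K}\log(16d^{\lambda}_{\bm K})/\delta$; beyond this, Stage 2 is inherited from \cite{li2019towards}.
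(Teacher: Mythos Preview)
Your proposal is correct and follows essentially the same approach as the paper: the paper's Lemma~\ref{lemdk} is exactly your Stage~1 (matrix Bernstein plus the pointwise domination $l_{\lambda}\le L_{\lambda}$, yielding the uniform bound $l_{\lambda}(\bm w)/q(\bm w)\le D^{\lambda}_{\bm K}$), and the paper then completes the proof by invoking Lemma~2, Lemma~5, Lemma~6, and Theorem~5 of \cite{li2019towards}, which is your Stage~2 imported verbatim. Your identification of the surrogate inequality as the single new ingredient, and of the $1/\delta$ dependence as arising from a Markov step rather than the tail of Bernstein, is accurate.
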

 Theorem~\ref{maintheo} shows that if the total number of random features  sampled from $q(\bm w)$ satisfies $s\geq 5 D^{\lambda}_{\bm{K}} \log \left(16 d_{\bm{K}}^{\lambda}\right) / \delta$, we incur no loss in the learning accuracy of kernel ridge regression.
In particular, with the standard choice $\lambda = \mathcal{O}(n^{-1/2})$, the estimator $f_{\bm \beta}^{\lambda}$ attains the minimax rate of kernel ridge regression.

To illustrate the lower bound in Theorem~\ref{maintheo} on the number of features, we consider three cases regarding the eigenvalue decay of $\bm K$: i) the exponential decay $\lambda_i \propto n e^{-ci}$ with $c > 0$, ii) the polynomial decay $\lambda_i \propto n i^{-2t}$ with $t \geq 1$, and iii) the slowest decay with $\lambda_i \propto n /i$ (see \cite{bach2013sharp} for details).
In all three cases, direct calculation shows
\begin{equation*}
  D^{\lambda}_{\bm{K}} = \frac{1}{n^2 \lambda}\operatorname{Tr}\left[ (\bm y \bm y^{\!\top} + n\bm I) \bm{K}\right] \leq \frac{2}{n \lambda} \operatorname{Tr}(\bm K) \in \mathcal{O}(\sqrt{n})\,.
\end{equation*}
Moreover, $d^{\lambda}_{\bm{K}}$ satisfies $d^{\lambda}_{\bm{K}} \in \mathcal{O}(\log n)$ in the exponential decay case, $d_{\bm K}^{\lambda} \in \mathcal{O}(n^{1/(4t)})$ in the polynomial decay case, and $d_{\bm K}^{\lambda} \in \mathcal{O}(\sqrt{n})$ in the slowest case. Combining these bounds gives the number $ s $ of random features sufficient for no loss in the learning accuracy of KRR; these results are reported in Tab.~\ref{tab:freq}.
It can be seen that sampling from $q^*(\bm w)$ \cite{li2019towards} sometimes requires fewer random features than our method.
This is actually reasonable as the design of our surrogate ERLS distribution follows in a simple fashion and we directly relax $D_K^{\lambda}$ to $\mathcal{O}(\sqrt{n})$.
It does not strictly follow with the continuous generalization of the leverage scores used in the analysis of linear methods \cite{alaoui2015fast,cohen2017input,avron2017random}.
Actually, with a more careful argument, this bound can be further improved and made tight, which we leave to future works.
Nevertheless, our theoretical analysis actually provides the worst case estimation for the lower bound of $s$. In practical uses, our algorithm would not require the considerable number of random features to achieve a good prediction performance.
Specifically, our experimental results in Section~\ref{sec:exp} demonstrate that when using the same $s$, there is no distinct difference between \cite{li2019towards} and our method in terms of prediction performance.
But our approach costs less time to generate the refined random features, achieving a substantial computational saving when the total number of random features is relatively large.

\begin{table}
\centering
\small
  \caption{
  Comparisons of the number $ s $ of features required by two sampling schemes.
  }
  \label{tab:freq}
  \begin{tabular}{ccccccc}
    \toprule[1.5pt]
  Eigenvalue decay  &\cite{li2019towards} 	&Ours \\
  \midrule[1pt]
  $\lambda_i \propto n e^{-ci}$, $c > 0$
  &$s \geq \log^2 n$
 &$s \geq \sqrt{n} \log \log n$ \\
 \hline
  $\lambda_i \propto n i^{-2t}$, $t \geq 1$
  &$s \geq n^{1/(4t)} \log n$
 &$s \geq \sqrt{n} \log n$ \\
 \hline
 $\lambda_i \propto n /i$
  &$s \geq \sqrt{n} \log n$
 &$s \geq \sqrt{n} \log n$ \\
  \bottomrule[1.5pt]
\end{tabular}
\end{table}

To prove Theorem~\ref{maintheo}, we need the following lemma.

\begin{lemma}\label{lemdk}
  Under the same assumptions from Theorem~\ref{maintheo}, let $\epsilon \geq \sqrt{m / s}+2 L / 3 s$ with constants $m$ and $L$ given by
  \begin{equation*}
  \begin{split}
    m:= D^{\lambda}_{\bm{K}} \frac{\lambda_1}{\lambda_1 + n \lambda} ~~~L:= \sup_{i} \frac{l_{\lambda}(\bm w_i)}{q(\bm w_i)},~\forall i =1,2, \cdots, s \,.
    \end{split}
  \end{equation*}
  If the number of random features satisfies
  \begin{equation}
s \geq D^{\lambda}_{\bm{K}} \left(\frac{1}{\epsilon^{2}}+\frac{2}{3 \epsilon}\right) \log \frac{16 d_{\bm{K}}^{\lambda}}{\delta}\,,
\end{equation}
then for $0 < \delta <1$, with probability $1-\delta$, we have
\begin{equation}
-\epsilon \bm{I} \preceq(\bm{K}+n \lambda \bm{I})^{-\frac{1}{2}}(\tilde{\bm{K}}_q-\bm{K})(\bm{K}+n \lambda \bm{I})^{-\frac{1}{2}} \preceq \epsilon \bm{I}\,.
\end{equation}
\end{lemma}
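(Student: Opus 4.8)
The claimed two-sided semidefinite bound is equivalent to the operator-norm statement $\big\|(\bm{K}+n\lambda\bm{I})^{-1/2}(\tilde{\bm{K}}_q-\bm{K})(\bm{K}+n\lambda\bm{I})^{-1/2}\big\|\le\epsilon$, since the matrix in question is real symmetric, and the plan is to prove this by matrix concentration, along the lines of \cite{li2019towards}. First I would write the sandwiched discrepancy as a sum of $s$ i.i.d.\ self-adjoint rank-one matrices: with the real-valued features (taking real parts, as the kernel is real), set $\bm{S}_i=(\bm{K}+n\lambda\bm{I})^{-1/2}\bm{z}_{q,\bm{w}_i}(\bm{X})\bm{z}_{q,\bm{w}_i}^{\!\top}(\bm{X})(\bm{K}+n\lambda\bm{I})^{-1/2}$, suitably normalised by $1/s$, so that by definition of $\bm{Z}_q$ and $\tilde{\bm{K}}_q$ one has $\sum_{i=1}^{s}\bm{S}_i=(\bm{K}+n\lambda\bm{I})^{-1/2}\tilde{\bm{K}}_q(\bm{K}+n\lambda\bm{I})^{-1/2}$, while the importance weights $p(\bm{w})/q(\bm{w})$ carried by $\bm{z}_{q,\bm{w}}$ debias the sampling, giving $\sum_{i=1}^{s}\mathbb{E}\bm{S}_i=(\bm{K}+n\lambda\bm{I})^{-1/2}\bm{K}(\bm{K}+n\lambda\bm{I})^{-1/2}=:\bm{M}$; here one uses $\mathbb{E}_{\bm{w}\sim q}[\tfrac{p(\bm{w})}{q(\bm{w})}\bm{z}_{p,\bm{w}}(\bm{X})\bm{z}_{p,\bm{w}}^{\!\top}(\bm{X})]=\int p(\bm{w})\bm{z}_{p,\bm{w}}(\bm{X})\bm{z}_{p,\bm{w}}^{\!\top}(\bm{X})\,\mathrm{d}\bm{w}=\bm{K}$. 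The target inequality becomes $\big\|\sum_i(\bm{S}_i-\mathbb{E}\bm{S}_i)\big\|\le\epsilon$.

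Next I would collect the two ingredients a Bernstein-type bound needs. For the almost-sure bound, each summand satisfies $\|\bm{S}_i\|=\tfrac1s\,\bm{z}_{q,\bm{w}_i}^{\!\top}(\bm{X})(\bm{K}+n\lambda\bm{I})^{-1}\bm{z}_{q,\bm{w}_i}(\bm{X})=\tfrac1s\,l_{\lambda}(\bm{w}_i)/q(\bm{w}_i)\le L/s$, which is exactly the role of $L$ in the statement; this is the crucial place where the \emph{surrogate} property enters, since $q=L_{\lambda}/D^{\lambda}_{\bm{K}}$ and the pointwise inequality $l_{\lambda}\le L_{\lambda}$ give $l_{\lambda}(\bm{w})/q(\bm{w})=l_{\lambda}(\bm{w})D^{\lambda}_{\bm{K}}/L_{\lambda}(\bm{w})\le D^{\lambda}_{\bm{K}}$, so $L\le D^{\lambda}_{\bm{K}}$ and (together with $\|\bm{M}\|=\lambda_1/(\lambda_1+n\lambda)$) every centered summand is controlled by $D^{\lambda}_{\bm{K}}$. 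For the matrix variance, writing $\bm{S}_i=\tfrac1s\bm{r}_i\bm{r}_i^{\!\top}$ with $\bm{r}_i=(\bm{K}+n\lambda\bm{I})^{-1/2}\bm{z}_{q,\bm{w}_i}(\bm{X})$ gives $\bm{S}_i^2=\tfrac1s\|\bm{r}_i\|^2\,\bm{S}_i$ and $\|\bm{r}_i\|^2=l_{\lambda}(\bm{w}_i)/q(\bm{w}_i)\le L$, hence $\mathbb{E}\bm{S}_i^2\preceq\tfrac{L}{s}\,\mathbb{E}\bm{S}_i$ and $\sum_i\mathbb{E}(\bm{S}_i-\mathbb{E}\bm{S}_i)^2\preceq\sum_i\mathbb{E}\bm{S}_i^2\preceq\tfrac{L}{s}\bm{M}=:\bm{W}$. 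This semidefinite upper bound has operator norm $\|\bm{W}\|=\tfrac{L}{s}\,\lambda_1/(\lambda_1+n\lambda)\le m/s$ (using $L\le D^{\lambda}_{\bm{K}}$), and, using $\operatorname{Tr}\bm{M}=\operatorname{Tr}[\bm{K}(\bm{K}+n\lambda\bm{I})^{-1}]=d^{\lambda}_{\bm{K}}$ and $n\lambda\le\lambda_1$, intrinsic dimension $\operatorname{Tr}(\bm{W})/\|\bm{W}\|=d^{\lambda}_{\bm{K}}(\lambda_1+n\lambda)/\lambda_1\le 2d^{\lambda}_{\bm{K}}$. So both $\sqrt{\|\bm{W}\|}$ and the per-summand norm match the quantities $\sqrt{m/s}$ and $2L/(3s)$ appearing in the hypothesis on $\epsilon$.

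Finally I would apply the intrinsic-dimension form of the matrix Bernstein inequality to $\sum_i(\bm{S}_i-\mathbb{E}\bm{S}_i)$ with the variance proxy $\bm{W}$: for $\epsilon$ as large as the hypothesis requires (so that $\epsilon\ge\sqrt{\|\bm{W}\|}+\tfrac13(L/s)$ holds), it yields a tail of the form
\[
\Pr\!\left[\Big\|\sum_{i=1}^{s}(\bm{S}_i-\mathbb{E}\bm{S}_i)\Big\|\ge\epsilon\right]\ \lesssim\ d^{\lambda}_{\bm{K}}\,\exp\!\left(\frac{-\,s\epsilon^2}{m+L\epsilon}\right),
\]
and forcing the right-hand side below $\delta$, then replacing $m$ and $L$ by the common upper bound $D^{\lambda}_{\bm{K}}$, reproduces the stated requirement $s\ge D^{\lambda}_{\bm{K}}(\epsilon^{-2}+\tfrac23\epsilon^{-1})\log(16d^{\lambda}_{\bm{K}}/\delta)$, with the numerical constants absorbed into the factor $16$. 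The main obstacle is the variance/dimension bookkeeping rather than any single inequality: one must invoke the \emph{intrinsic-dimension} version of Bernstein (so the logarithmic factor scales with $d^{\lambda}_{\bm{K}}$ and not the ambient dimension $n$), and route the estimates so they land on the right symbols — the variance and per-term norm through $m$ and $L$, each then bounded by $D^{\lambda}_{\bm{K}}$, while the intrinsic dimension stays $O(d^{\lambda}_{\bm{K}})$ — and for this the inequality $l_{\lambda}\le L_{\lambda}$ (i.e.\ that $q$ is built from a genuine surrogate of the true leverage function) is precisely what keeps the $L$-dependent terms bounded.
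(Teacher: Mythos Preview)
Your proposal is correct and is essentially the approach the paper takes: the paper's proof is a two-line sketch saying ``follow the proof of Lemma~4 in \cite{li2019towards}, apply the matrix Bernstein inequality \cite{tropp2015introduction}, and use $l_{\lambda}(\bm w)\le L_{\lambda}(\bm w)$,'' and your write-up is exactly the fleshed-out version of that sketch (rank-one decomposition, per-term and variance bounds via $l_{\lambda}/q\le D^{\lambda}_{\bm K}$, intrinsic-dimension Bernstein with $\operatorname{Tr}\bm M=d^{\lambda}_{\bm K}$).
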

\begin{proof}
	Following the proof of Lemma 4 in \cite{li2019towards}, by the matrix Bernstein concentration inequality \cite{tropp2015introduction} and $l_{\lambda}(\bm w) \leq L_{\lambda}(\bm w)$, we conclude the proof.
\end{proof}
Based on Lemma~\ref{lemdk}, as well as the previous results including Lemma~2, Lemma~5, Lemma~6, Theorem~5 in \cite{li2019towards}, we can immediately prove Theorem~\ref{maintheo}.

\subsection{Expected risk for Algorithm~\ref{ago3} }
In the above analysis, our results are based on the random features $\{ \bm w_i \}_{i=1}^s$ sampled from ${q}(\bm w)$. In Algorithm~\ref{ago3}, $\{ \bm w_i \}_{i=1}^s$ are actually drawn from $\tilde{q}(\bm w)$ or $\tilde{q}'(\bm w)$.
In this section, we present the convergence rates for the expected risk of Algorithm~\ref{ago3}.
\begin{theorem}\label{maintheoapp}
Under the same assumptions from Theorem~\ref{maintheo}, denote by $\tilde{f}^{\lambda^*}$ the KRR estimator obtained using a regularization parameter $\lambda^*$ and the features $\{ \bm w_i \}_{i=1}^s$ sampled via Algorithm~\ref{ago3}. If the number of random features satisfies
	\begin{equation*}
	s \geq \max \left\{ \frac{7 z_0^2 \log \left(16 d_{\bm{K}}^{\lambda}\right) }{\lambda \delta},  5 D^{\lambda^{\!*}}_{\bm{K}} \frac{\log \left(16 d_{\bm{K}}^{\lambda^*}\right) }{ \delta} \right\} \,,
	\end{equation*}
	with $|z_p(\bm w, \bm x)|<z_0$,
	then for $0 < \delta <1$, with probability  $1-\delta$, the excess risk of $\tilde{f}^{\lambda^*}$ can be estimated by
	\begin{equation}
	\mathcal{E}(\tilde{f}^{\lambda^*}) - \mathcal{E}\left(f_{\rho}\right) \leq 2 \lambda + 2 \lambda^* +\mathcal{O}(1 / \sqrt{n})\,.
	\end{equation}
\end{theorem}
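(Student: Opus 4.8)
\noindent\emph{Proof strategy.} The plan is to reduce the analysis of Algorithm~\ref{ago3} to Theorem~\ref{maintheo} by separately controlling the two sources of randomness in the procedure: the first-stage draw $\{\bm w_i\}_{i=1}^l\sim p(\bm w)$ that is used only to build the proxy kernel matrix $\tilde{\bm K}_p=\bm Z_l\bm Z_l^{\!\top}$ and hence the re-sampling weights $p_i\propto(\bm y^{\!\top}(\bm Z_l)_i)^2+n\|(\bm Z_l)_i\|_2^2$ (or, for $\tilde q'$, just the first term), and the second-stage multinomial re-sampling of $s$ features according to those weights. I would condition on the first-stage sample throughout and split the failure probability as $\delta/2+\delta/2$.

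\medskip
\noindent\emph{Step 1: the proxy distribution is a valid surrogate.} First I would apply the matrix Bernstein inequality \cite{tropp2015introduction} to the i.i.d.\ sum $\tilde{\bm K}_p=\sum_{i=1}^l\bm z_{p,\bm w_i}(\bm X)\bm z_{p,\bm w_i}^{\!\top}(\bm X)$, using $|z_p(\bm w,\bm x)|<z_0$, to show that once $l$ (hence $s$, since $l\ge s$) exceeds the first term $7z_0^2\log(16d_{\bm K}^{\lambda})/(\lambda\delta)$, with probability $1-\delta/2$ the regularized proxy is a constant-factor spectral approximation, $c_1(\bm K+n\lambda\bm I)\preceq\tilde{\bm K}_p+n\lambda\bm I\preceq c_2(\bm K+n\lambda\bm I)$ for absolute constants $c_1,c_2$. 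Combining this with the elementary domination $(\bm K+n\lambda^{*}\bm I)^{-1}\preceq\frac1{n^2\lambda^{*}}(\bm y\bm y^{\!\top}+n\bm I)$ already used after Eq.~\eqref{lw}, I would conclude that the empirical re-sampling distribution still dominates the true ridge-leverage function at the target regularization $\lambda^{*}$ up to an absolute constant, i.e.\ $l_{\lambda^{*}}(\bm w)\le c\,\tilde q(\bm w)\,\tilde D$ at its support points, with normalizing mass $\tilde D:=\|\bm y^{\!\top}\bm Z_l\|_2^2+n\|\bm Z_l\|_{\text F}^2\le c'\,D_{\bm K}^{\lambda^{*}}$; dropping the $n\bm I$ term to get $\tilde q'$ only changes these constants, and is harmless under the column-norm fluctuation condition discussed after Algorithm~\ref{ago3}.

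\medskip
\noindent\emph{Step 2: second-stage concentration and risk decomposition.} Conditioned on the good first-stage event, I would re-run the argument behind Lemma~\ref{lemdk} and Theorem~\ref{maintheo} with $\tilde q$ in place of $q$ and $\lambda^{*}$ in place of $\lambda$: the second term $5D_{\bm K}^{\lambda^{*}}\log(16d_{\bm K}^{\lambda^{*}})/\delta$ in the hypothesis is exactly the feature count that makes a matrix-Bernstein bound on the re-sampled features yield $-\epsilon\bm I\preceq(\bm K+n\lambda^{*}\bm I)^{-1/2}(\tilde{\bm K}_q-\bm K)(\bm K+n\lambda^{*}\bm I)^{-1/2}\preceq\epsilon\bm I$ with a small $\epsilon$, with probability $1-\delta/2$. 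Feeding this spectral estimate into Lemma~2, Lemma~5, Lemma~6 and Theorem~5 of \cite{li2019towards} as in the proof of Theorem~\ref{maintheo} bounds the excess risk of $\tilde f^{\lambda^{*}}$ by $2\lambda^{*}+\mathcal O(1/\sqrt n)+[\mathcal E(\hat f^{\lambda^{*}})-\mathcal E(f_\rho)]$, while the first-stage approximation at resolution $\lambda$ contributes the extra $2\lambda$. Finally, since $f_\rho$ lies in the unit ball of $\mathcal H$ and $\lambda^{*}\in\mathcal O(n^{-1/2})$, the standard KRR excess risk $\mathcal E(\hat f^{\lambda^{*}})-\mathcal E(f_\rho)$ is itself $\mathcal O(1/\sqrt n)$ and is absorbed, and a union bound over Steps~1 and~2 gives the stated bound with probability $1-\delta$.

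\medskip
\noindent\emph{Main obstacle.} The delicate point is Step~1: since $\tilde q$ is a random object built from the first-stage draw, one must convert a purely operator-norm control of $\tilde{\bm K}_p$ into the \emph{pointwise} leverage domination $l_{\lambda^{*}}(\bm w)\lesssim\tilde q(\bm w)\,\tilde D$ and the effective-dimension bound $\tilde D\lesssim D_{\bm K}^{\lambda^{*}}$ that drive the second-stage Bernstein argument, all while keeping the constants (and the dependence between the two stages, handled here by conditioning) under control. Tracking how the omission of the $n\bm I$ term in $\tilde q'$ propagates through these estimates is the remaining technical nuisance.
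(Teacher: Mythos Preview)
Your route is genuinely different from the paper's, and the difference is worth spelling out because it explains both where the term $2\lambda$ really comes from and why the ``main obstacle'' you flag can be sidestepped entirely.

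The paper does \emph{not} try to show that the empirical weights $\tilde q$ dominate the ridge leverage function $l_{\lambda^*}$ of the \emph{original} kernel $K$. Instead it runs a two-level risk decomposition through the intermediate RKHS $\tilde{\mathcal H}$ generated by the first-stage approximate kernel $\tilde k$. Concretely: (i) the first branch of the $\max$ is used, via Theorem~1 and Corollary~2 of \cite{li2019towards}, to bound the excess risk of the plain first-stage RFF estimator $f_{\bm\alpha}^{\lambda}$ by $2\lambda+\mathcal O(1/\sqrt n)$; this is the source of the $2\lambda$. (ii) The second branch of the $\max$ is simply the hypothesis of Theorem~\ref{maintheo} applied with $\tilde k$ playing the role of the ``true'' kernel and $\lambda^*$ as the regularizer, giving $\mathcal E(\tilde f^{\lambda^*})-\mathcal E(f_{\tilde{\mathcal H}})\le 2\lambda^*+\mathcal O(1/\sqrt n)$, where $f_{\tilde{\mathcal H}}$ is the risk minimizer in $\tilde{\mathcal H}$. (iii) Since $f_{\bm\alpha}^{\lambda}\in\tilde{\mathcal H}$, one has $\mathcal E(f_{\tilde{\mathcal H}})\le\mathcal E(f_{\bm\alpha}^{\lambda})$, and chaining the two inequalities yields $2\lambda+2\lambda^*+\mathcal O(1/\sqrt n)$.

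The point is that once you regard $\tilde k$ as the ambient kernel in stage two, the re-sampling weights computed from $\bm Z_l$ are \emph{exactly} the surrogate leverage scores $L_{\lambda^*}$ of $\tilde k$, so Theorem~\ref{maintheo} applies verbatim and no pointwise-domination argument against the original $l_{\lambda^*}$ is needed. Your Step~1, by contrast, uses the first feature-count condition to get an operator-norm comparison $\tilde{\bm K}_p\approx\bm K$ and then tries to push this to a leverage-score comparison; that is the hard step you correctly identify, and the paper simply avoids it. Correspondingly, in your decomposition there is no natural mechanism producing the extra $2\lambda$ (if your Step~2 really controlled $\tilde{\bm K}_q-\bm K$ against the original $\bm K$, the bound would be $2\lambda^*+\mathcal O(1/\sqrt n)$ alone); the sentence ``the first-stage approximation at resolution $\lambda$ contributes the extra $2\lambda$'' does not match the estimate you wrote just before it. In the paper's argument that term arises transparently as the excess risk of the first-stage estimator, which is then threaded through $f_{\tilde{\mathcal H}}$.
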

\begin{proof}
	According to Theorem~1 and Corollary~2 in \cite{li2019towards}, if the number of random features satisfies
$ s \geq {7 z_0^2 \log \left(16 d_{\bm{K}}^{\lambda}\right) }/(\lambda \delta)$, 
	then for any $0 < \delta <1$, with confidence $1-\delta$, the excess risk of $f^{\lambda}_{\bm \alpha}$ can be bounded by
	\begin{equation}\label{frisklam}
	\mathcal{E}({f}^{\lambda}_{\bm \alpha}) - \mathcal{E}\left(f_{\rho}\right) \leq 2 \lambda  +\mathcal{O}(1 / \sqrt{n})\,.
	\end{equation}
	Let $f_{\tilde{\mathcal{H}}}$ be the function in $\tilde{\mathcal{H}}$ spanned by the approximated kernel that achieves the minimal risk, i.e., $\mathcal{E}(f_{\tilde{\mathcal{H}}}) = \inf_{f \in \tilde{\mathcal{H}}} \mathcal{E}(f)$.
	Hence, we re-sample $\{\bm w_i\}_{i=1}^s$ according to $q(\bm w)$ as defined in Eq.~\eqref{serls}, in which the kernel matrix is indicated by the actual kernel $\tilde{k}$ spanned in $\tilde{\mathcal{H}}$.
	Denote our KRR estimator with the regularization parameter $\lambda^*$ and the learning function $\tilde{f}^{\lambda^*}$, and according to Theorem~\ref{maintheo}, if the number of random features $s$ satisfies
	$s \geq 5 D^{\lambda^*}_{\bm{K}} \frac{\log \left(16 d_{\bm{K}}^{\lambda^*}\right) }{ \delta}\,,$
	then for $0 < \delta <1$, with confidence $1-\delta$, the excess risk of $\tilde{f}^{\lambda^{*}}$ can be estimated by
	\begin{equation}\label{frisklams}
	\mathcal{E}\big(\tilde{f}^{\lambda^{*}}\big) - \mathcal{E}\left(f_{\tilde{\mathcal{H}}} \right) \leq 2 \lambda^{*}+\mathcal{O}(1 / \sqrt{n}) \,.
	\end{equation}
	Since $f_{\tilde{\mathcal{H}}}$ achieves the minimal risk over $\mathcal{H}$, we can conclude that $\mathcal{E}(f_{\tilde{\mathcal{H}}}) \leq \mathcal{E}(f^{\lambda}_{\bm \alpha})$.
	Combining Eq.~\eqref{frisklam} and Eq.~\eqref{frisklams}, we obtain the final excess risk of $\mathcal{E}(\tilde{f}^{\lambda^{*}})$.
\end{proof}
Theorem~\ref{maintheoapp} provides the upper bound of the expected risk in KRR estimator over random features generated by Algorithm~\ref{ago3}.
Note that, in our implementation, the number of random features used to approximate the kernel matrix is also set to $s$ for simplicity, which shares the similar way with the implementation in \cite{li2019towards}.
\begin{figure*}
\centering
\subfigure[Relative error]{\label{sapp}
\includegraphics[width=0.3\textwidth]{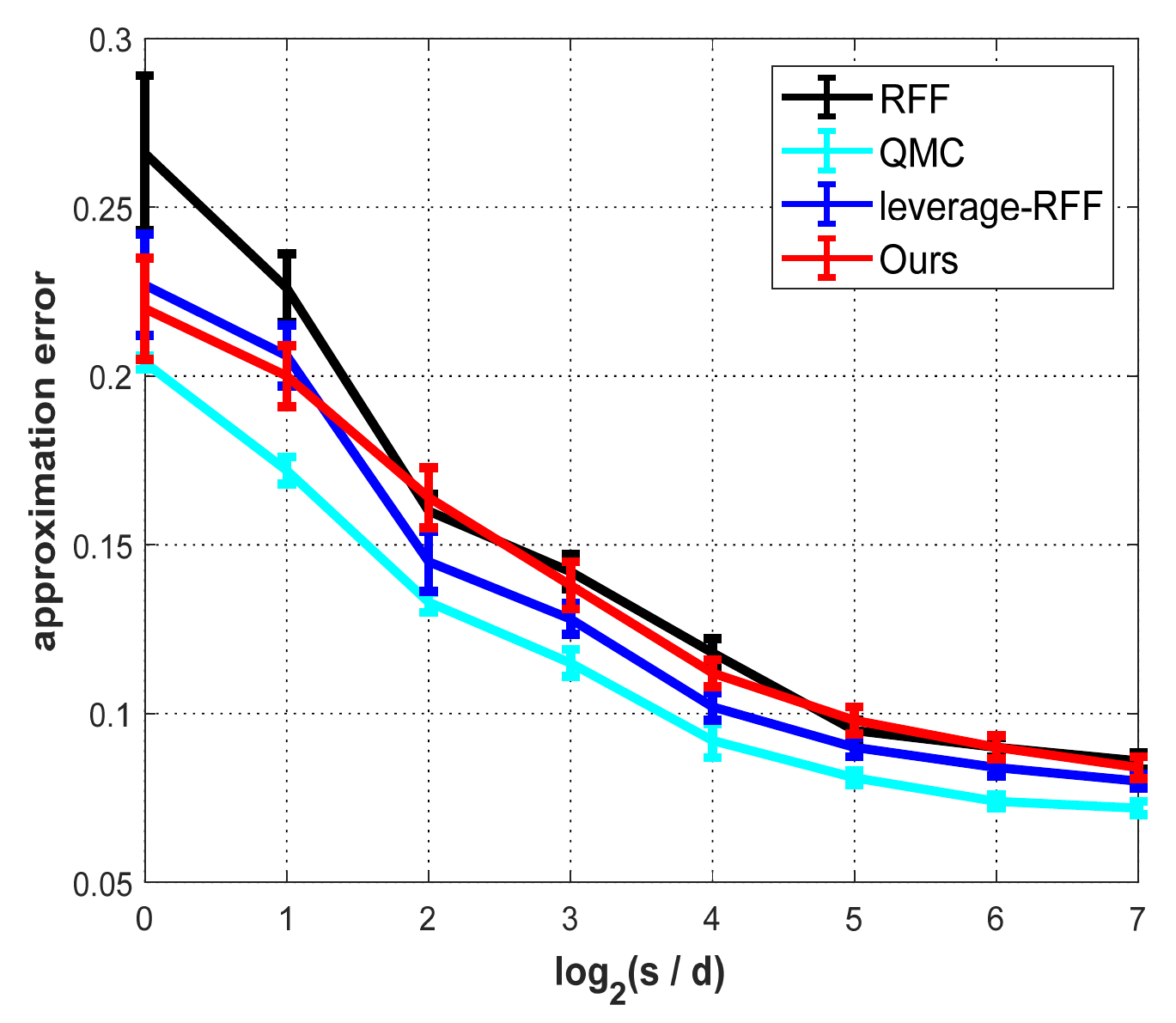}}
\subfigure[Test accuracy]{\label{sacc}
\includegraphics[width=0.3\textwidth]{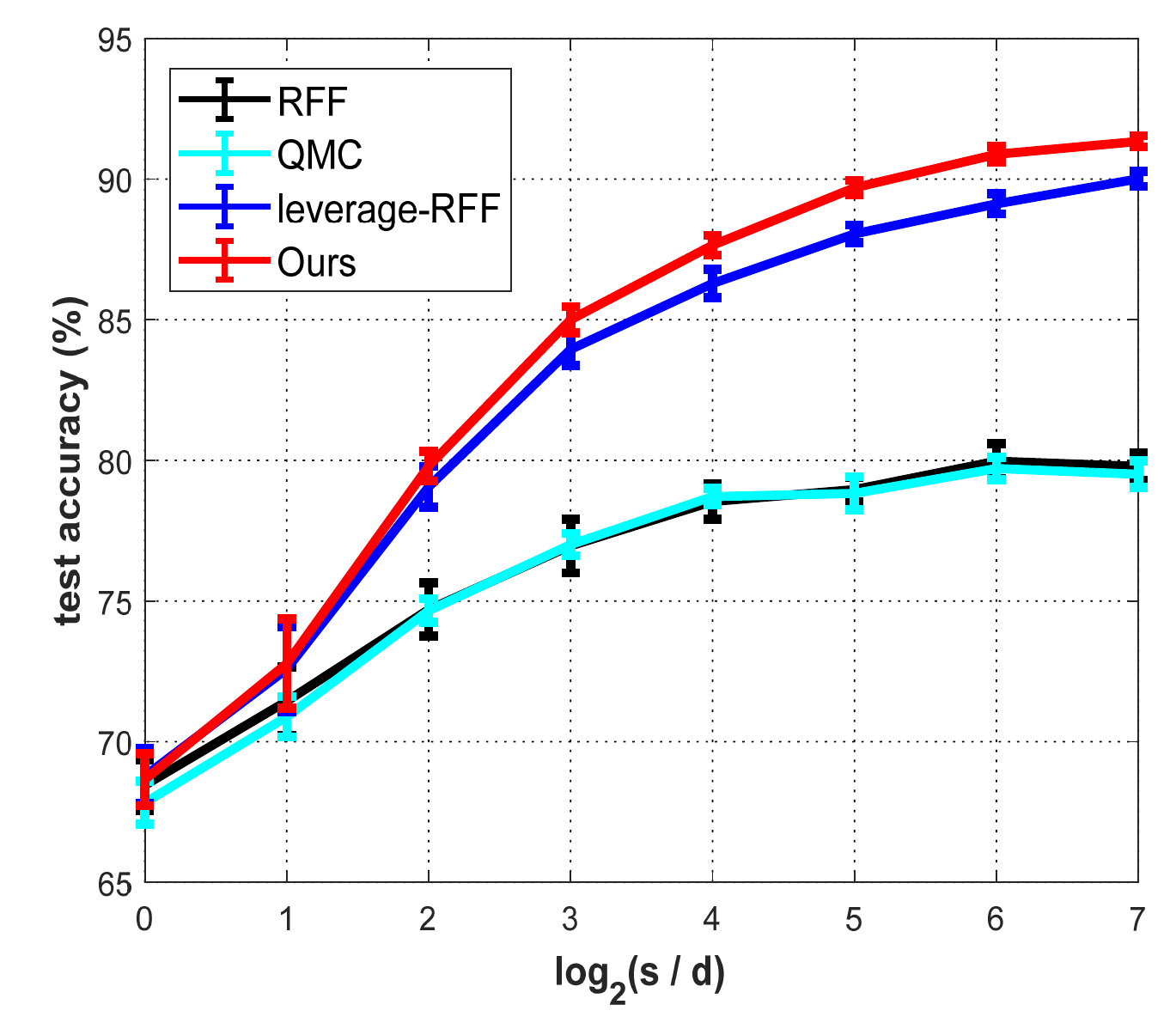}}
\subfigure[Time cost]{\label{stime}
\includegraphics[width=0.3\textwidth]{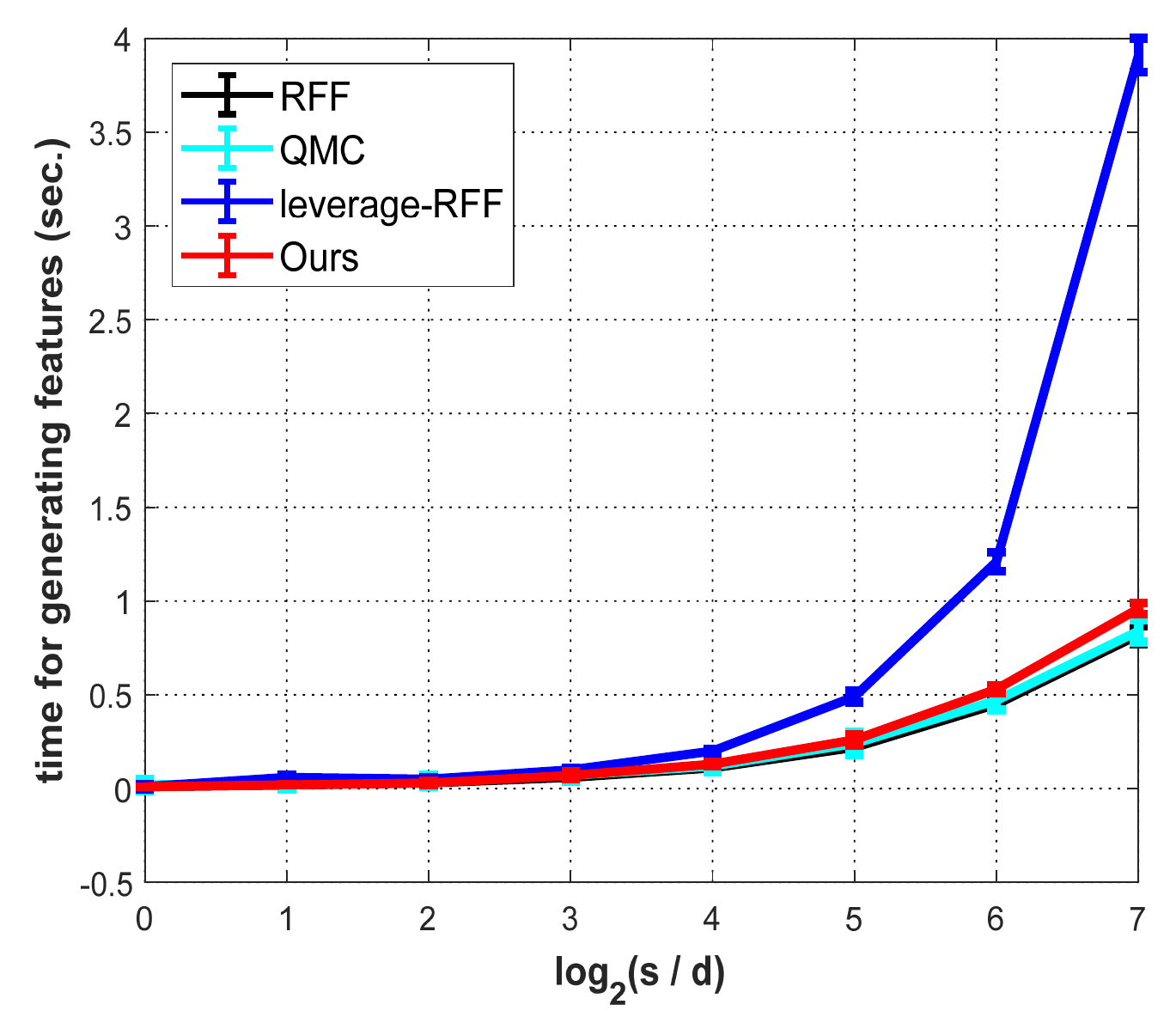}}
 \caption{Comparison of various algorithms on approximation error in (a), test accuracy in (b), and time cost for generating random features in (c) versus the number of random features $s$ on the \emph{EEG} dataset.}\label{tl1paramns}
 \end{figure*}

\section{Experiments}
\label{sec:exp}
In this section, we empirically evaluate the performance of our method equipped with KRR for classification tasks on several benchmark datasets.
All experiments are implemented in MATLAB and carried out on a PC with Intel$^\circledR$ i5-6500 CPU (3.20 GHz) and 16 GB RAM.
The source code of our implementation can be found in \url{http://www.lfhsgre.org}.

\subsection{Experimental settings}
We choose the popular shift-invariant Gaussian/RBF kernel for experimental validation, i.e., $k(\bm x, \bm x') = \exp(-\| \bm x - \bm x' \|^2/\sigma^2)$.
Following \cite{avron2017random}, we use a fixed bandwidth $\sigma=1$ in our experiments.
This is without loss of generality since we can rescale the points and adjust the bounding interval.
The regularization parameter $\lambda$ is tuned via 5-fold inner cross validation over a grid of $\{ 0.05,0.1,0.5,1 \}$.

\begin{table}
	\centering
	\scriptsize
	\caption{\footnotesize Dataset statistics.}
	\label{tablarge}
	\begin{threeparttable}
		\begin{tabular}{cccccccccccccc}
			\toprule[1.5pt]
			datasets & \#feature dimension  & \#traing examples & \#test examples\\
			\midrule[1pt]
			\emph{EEG} &14 &7,490 &7,490
			\\
			\hline
            \emph{cod-RNA} &8 &59,535 &157,413
			\\
			\hline
			\emph{covtype} &54 &290,506 &290,506
			\\
            \hline
            \emph{magic04} &10 & 9510 &9510 \\
			\bottomrule[1.5pt]
		\end{tabular}
	\end{threeparttable}
\end{table}

{\bf Datasets:}
We consider four classification datasets including \emph{EEG}, \emph{cod-RNA}, \emph{covtype} and \emph{magic04}; see Tab.~\ref{tablarge} for an overview of these datasets.
These datasets can be downloaded from \url{https://www.csie.ntu.edu.tw/~cjlin/libsvmtools/datasets/} or the UCI Machine Learning Repository\footnote{\url{https://archive.ics.uci.edu/ml/datasets.html.}}.
All datasets are normalized to $[0, 1]^d$ before the experiments.
We use the given training/test partitions on the \emph{cod-RNA} dataset.
For the other three datasets, we randomly pick half of the data for training and the rest for testing.
All  experiments are repeated 10 times and we report the average classification accuracy as well as the time cost for generating random features.

{\bf Compared methods:}
We compare the proposed surrogate leverage weighted sampling strategy with the following three random features based algorithms:
\begin{itemize}
	\item RFF \cite{rahimi2007random}: The feature mapping $\varphi_p(\bm x)$ is given by Eq.~\eqref{mapping}, in which the random features $\{ \bm w_i\}_{i=1}^s$ are sampled from $p(\bm w)$.
	\item QMC \cite{Avron2016Quasi}: The feature mapping $\varphi_p(\bm x)$ is also given by Eq.~\eqref{mapping}, but the random features $\{ \bm w_i\}_{i=1}^s$ are constructed by a deterministic scheme, e.g., a low-discrepancy Halton sequence.
	\item leverage-RFF \cite{li2019towards}: The data-dependent random features $\{ \bm w_i\}_{i=1}^s$ are sampled from $q^*(\bm w)$.
The kernel matrix in $q^*(\bm w)$ is approximated using random features pre-sampled from $p(\bm w)$.
\end{itemize}
In our method, we consider sampling from $\tilde{q}'(\bm w)$ in Algorithm~\ref{ago3} for simplicity.

\begin{table*}[!htb]
  \resizebox{1.98\columnwidth}{!}{
  \begin{threeparttable}
  \caption{Comparison results of various algorithms for varying $s$ in terms of classification accuracy (mean$\pm$std. deviation \%) and time cost for generating random features (mean$\pm$std. deviation sec.). The higher test accuracy means better. Notation ``$\bullet$" indicates that leverage-RFF and our method are significantly better than the other two baseline methods via paired t-test.}
  \label{Tabres}
    \begin{tabular}{ccccc|cccc}
    \toprule[1.5pt]
    &\multirow{2}{0.7cm}{Dataset}&\multirow{2}{0.2cm}{$s$} &RFF &QMC &leverage-RFF & Ours  \cr
     \cmidrule(lr){4-7}
     &&&Acc:\% (time:sec.) &Acc:\% (time:sec.)&Acc:\% (time:sec.) &Acc:\% (time:sec.)  \cr
    \midrule[1pt]
    &\multirow{8}{1.2cm}{\emph{EEG}}
    & $d$     & 68.45$\pm$0.89 (0.01$\pm$0.00) & 67.83$\pm$0.73 (0.02$\pm$0.03) & 68.78$\pm$0.97 (0.01$\pm$0.01) & 68.62$\pm$0.89 (0.01$\pm$0.00) \\
    &&$2d$    & 71.44$\pm$1.22 (0.02$\pm$0.00) & 70.89$\pm$0.72 (0.03$\pm$0.03) & 72.59$\pm$1.51 (0.03$\pm$0.01) & 72.72$\pm$1.35 (0.02$\pm$0.00) \\
     &&$4d$     & 74.70$\pm$0.94 (0.03$\pm$0.01) & 74.66$\pm$0.42 (0.04$\pm$0.03) & 79.06$\pm$0.73 (0.05$\pm$0.01)$\bullet$ & 79.72$\pm$0.58 (0.03$\pm$0.01)$\bullet$ \\
     &&$8d$     & 76.96$\pm$0.96 (0.06$\pm$0.02) & 77.01$\pm$0.40 (0.07$\pm$0.03) & 83.95$\pm$0.58 (0.10$\pm$0.01)$\bullet$ & 84.97$\pm$0.50 (0.07$\pm$0.02)$\bullet$ \\
     &&$16d$    & 78.54$\pm$0.63 (0.11$\pm$0.00) & 78.71$\pm$0.29 (0.12$\pm$0.03) & 86.29$\pm$0.50 (0.20$\pm$0.01)$\bullet$ & 87.23$\pm$0.41 (0.13$\pm$0.01)$\bullet$ \\
     &&$32d$    & 78.96$\pm$0.44 (0.22$\pm$0.02) & 78.83$\pm$0.59 (0.24$\pm$0.06) & 88.05$\pm$0.31 (0.49$\pm$0.03)$\bullet$ & 89.38$\pm$0.32 (0.26$\pm$0.03)$\bullet$ \\
     &&$64d$    & 79.97$\pm$0.62 (0.45$\pm$0.01) & 79.71$\pm$0.40 (0.47$\pm$0.05) & 89.12$\pm$0.36 (1.21$\pm$0.05)$\bullet$ & 90.36$\pm$0.31 (0.53$\pm$0.02)$\bullet$ \\
     &&$128d$   & 79.79$\pm$0.49 (0.82$\pm$0.05) & 79.51$\pm$0.47 (0.84$\pm$0.06)& 90.01$\pm$0.27 (3.91$\pm$0.09)$\bullet$ & 91.02$\pm$0.32 (0.96$\pm$0.03)$\bullet$ \\
     \cmidrule(lr){3-7}
    &\multirow{8}{1.5cm}{\emph{cod-RNA}} &$d$   & 87.02$\pm$0.29 (0.06$\pm$0.01) & 87.20$\pm$0.00 (0.07$\pm$0.03) & 88.62$\pm$0.92 (0.09$\pm$0.02) & 89.64$\pm$0.87 (0.07$\pm$0.01)$\bullet$ \\
    &&$2d$  & 87.12$\pm$0.19 (0.12$\pm$0.01) & 87.65$\pm$0.00 (0.16$\pm$0.02) & 90.42$\pm$1.15 (0.17$\pm$0.01)$\bullet$ & 90.12$\pm$0.95 (0.13$\pm$0.01)$\bullet$ \\
    &&$4d$  & 87.19$\pm$0.08 (0.24$\pm$0.01) & 87.44$\pm$0.00 (0.25$\pm$0.02) & 92.65$\pm$0.38 (0.35$\pm$0.02)$\bullet$ & 92.83$\pm$0.33 (0.27$\pm$0.01)$\bullet$ \\
    &&$8d$  & 87.27$\pm$0.11 (0.47$\pm$0.02) & 87.29$\pm$0.00 (0.49$\pm$0.02) & 93.41$\pm$0.07 (0.69$\pm$0.02)$\bullet$ & 93.49$\pm$0.15 (0.53$\pm$0.02)$\bullet$ \\
    &&$16d$ & 87.29$\pm$0.08 (0.91$\pm$0.02) & 87.30$\pm$0.00 (0.94$\pm$0.04) & 93.71$\pm$0.06 (1.39$\pm$0.05)$\bullet$ & 93.74$\pm$0.05 (0.99$\pm$0.02)$\bullet$ \\
    &&$32d$ & 87.27$\pm$0.05 (1.80$\pm$0.02) & 87.33$\pm$0.00 (1.77$\pm$0.01) & 93.76$\pm$0.02 (2.82$\pm$0.08)$\bullet$ & 93.71$\pm$0.07 (1.95$\pm$0.03)$\bullet$ \\
  &&$64d$ & 87.30$\pm$0.03 (3.48$\pm$0.15) &87.32$\pm$0.00 (3.54$\pm$0.10) & 93.73$\pm$0.03 (6.54$\pm$0.53)$\bullet$
 & 93.99$\pm$0.06 (4.05$\pm$0.08)$\bullet$ \\
    &&$128d$ & 87.30$\pm$0.03 (6.79$\pm$0.39)
& 87.32$\pm$0.00 (6.62$\pm$0.08)
 &93.66$\pm$0.03 (13.3$\pm$0.23)$\bullet$
&93.48$\pm$0.04 (7.78$\pm$0.09)$\bullet$
\\
    \cmidrule(lr){3-7}
    &\multirow{5}{1cm}{\emph{covtype}\tnote{1}} &
    $d$   & 73.70$\pm$0.79 (1.90$\pm$0.03) & 74.71$\pm$0.07 (1.88$\pm$0.11) & 73.99$\pm$0.85 (2.96$\pm$0.09) & 73.99$\pm$0.63 (2.00$\pm$0.05) \\
    &&$2d$  & 77.09$\pm$0.25 (3.31$\pm$0.21) & 77.04$\pm$0.06 (3.37$\pm$0.29) & 77.04$\pm$0.35 (5.25$\pm$0.09) & 77.02$\pm$0.29 (3.44$\pm$0.10) \\
    &&$4d$  & 79.10$\pm$0.13 (6.27$\pm$0.35) & 79.07$\pm$0.07 (6.12$\pm$0.17) & 79.18$\pm$0.17 (10.2$\pm$0.15) & 79.05$\pm$0.14 (6.58$\pm$0.19) \\
    &&$8d$  & 81.04$\pm$0.12 (12.3$\pm$0.71) & 80.90$\pm$0.05 (12.1$\pm$0.45) & 81.09$\pm$0.07 (21.1$\pm$0.72) & 80.79$\pm$0.11 (13.2$\pm$0.34) \\
    &&$16d$ & 82.42$\pm$0.10 (24.5$\pm$1.02) & 82.37$\pm$0.07 (24.3$\pm$1.56) & 82.90$\pm$0.12 (46.5$\pm$2.20) & 82.18$\pm$0.10 (28.6$\pm$1.58) \\
   \cmidrule(lr){3-7}
    &\multirow{8}{1.5cm}{\emph{magic04}} &$d$   & 73.62$\pm$0.68 (0.01$\pm$0.00) & 71.74$\pm$0.40 (0.02$\pm$0.04) & 73.62$\pm$0.68 (0.01$\pm$0.01) & 73.61$\pm$0.68 (0.01$\pm$0.00) \\
   && $2d$  & 75.89$\pm$0.80 (0.01$\pm$0.01) & 75.98$\pm$0.36 (0.03$\pm$0.03) & 75.91$\pm$0.77 (0.03$\pm$0.01) & 75.88$\pm$0.77 (0.02$\pm$0.00) \\
   && $4d$  & 77.78$\pm$0.45 (0.03$\pm$0.01) & 77.27$\pm$0.33 (0.04$\pm$0.03) & 77.78$\pm$0.45 (0.05$\pm$0.01) & 77.77$\pm$0.43 (0.03$\pm$0.00) \\
   && $8d$  & 78.97$\pm$0.34 (0.05$\pm$0.00) & 79.07$\pm$0.17 (0.07$\pm$0.03) & 79.15$\pm$0.40 (0.09$\pm$0.01) & 79.12$\pm$0.34 (0.06$\pm$0.01) \\
    &&$16d$ & 80.04$\pm$0.34 (0.10$\pm$0.01) & 79.95$\pm$0.37 (0.11$\pm$0.03) & 80.80$\pm$0.40 (0.19$\pm$0.01) & 80.74$\pm$0.42 (0.11$\pm$0.01) \\
    &&$32d$ & 80.61$\pm$0.43 (0.19$\pm$0.01) & 80.65$\pm$0.31 (0.21$\pm$0.04) & 82.00$\pm$0.32 (0.41$\pm$0.03)$\bullet$ & 82.02$\pm$0.32 (0.22$\pm$0.01)$\bullet$ \\
    &&$64d$ & 80.91$\pm$0.28 (0.38$\pm$0.03) & 80.85$\pm$0.27 (0.41$\pm$0.05) & 82.39$\pm$0.32 (0.93$\pm$0.05)$\bullet$ & 82.37$\pm$0.25 (0.44$\pm$0.03)$\bullet$ \\
    &&$128d$ & 81.10$\pm$0.37 (0.73$\pm$0.03) & 81.08$\pm$0.29 (0.76$\pm$0.04) & 82.59$\pm$0.29 (2.61$\pm$0.15)$\bullet$ & 82.55$\pm$0.55 (0.87$\pm$0.02)$\bullet$ \\
    \bottomrule[1.5pt]
    \end{tabular}
    \begin{tablenotes}
        \footnotesize
        \item[1] Due to the memory limit, we cannot conduct the experiment on the \emph{covtype} dataset when $s \geq 32d$.
\end{tablenotes}
    \end{threeparttable}}
\end{table*}

\subsection{Comparison results}
\subsubsection{High-level comparison:}
We compare the empirical performance of the aforementioned random features mapping based algorithms. In Fig.~\ref{tl1paramns}, we consider the  \emph{EEG} dataset and plot the relative kernel matrix approximation error, the test classification accuracy and the time cost for generating random features versus different values of $s$.
Note that since we cannot compute the kernel matrix $\bm K$ on the entire dataset, we randomly sample 1,000 datapoints to construct the feature matrix $\bm Z_s \bm Z_s^{\!\top}$, and then calculate the relative approximation error, i.e., $err=\frac{\| \bm K - \bm Z_s \bm Z_s^{\!\top} \|_2}{\| \bm K \|_2}$.

Fig.~\ref{sapp} shows the mean of the approximation errors across 10 trials (with one standard deviation denoted by error bars) under different random features dimensionality.
We find that QMC achieves the best approximation performance when compared to RFF, leverage-RFF, and our proposed method.
Fig.~\ref{sacc} shows the test classification accuracy. We find that as the number
of random features increases, leverage-RFF and our method significantly outperform RFF and QMC .

From the above experimental results, we find that, admittedly, QMC achieves lower approximation error to some extent, but it does not translate to better classification performance when compared to leverage-RFF and our method.
The reason may be that the original kernel derived by the point-wise distance might not be suitable, and the approximated kernel is not optimal for classification/regression tasks, as discussed in \cite{avron2017random,munkhoeva2018quadrature,zhang2019f}.
As the ultimate goal of kernel approximation is to achieve better prediction performance, in the sequel we omit the approximation performance of these methods.

In terms of time cost for generating random features, Fig.~\ref{stime} shows that leverage-RFF is quite time-consuming when the total number of random features is large.
In contrast, our algorithm achieves comparable computational efficiency with RFF and QMC.
These results demonstrate the superiority of our surrogate weighted sampling strategy, which reduces the time cost.
\subsubsection{Detailed comparisons:}
Tab.~\ref{Tabres} reports the detailed classification accuracy and time cost for generating random features of all algorithms on the four datasets.
Observe that by using a \emph{data-dependent} sampling strategy, leverage-RFF and our method achieve better classification accuracy than RFF and QMC on the \emph{EEG} and \emph{cod-RNA} dataset when the dimensionality of random features increases.
In particular, on the \emph{EEG} dataset, when $s$ ranges from $2d$ to $128d$, the test accuracy of leverage-RFF and our method is better than RFF and QMC by around 1\% to nearly 11\%.
On the \emph{cod-RNA} dataset, the performance of RFF and QMC is worse than our method by over 6\%  when $s\geq4d$.
On the \emph{covtype} dataset, all four methods achieve similar the classification accuracy.
Instead, on the \emph{magic04} dataset, our algorithm and leverage-RFF perform better than RFF and QMC on the final classification accuracy if more random features are considered.

In terms of computational efficiency on these four datasets, albeit \emph{data-dependent},
our method still takes about the similar time cost with the \emph{data-independent} RFF and QMC for generating random features.
Specifically, when compared to leverage-RFF, our method achieves a
substantial computational saving.


\section{Conclusion}
\label{sec:conclusion}

In this work, we have proposed an effective \emph{data-dependent} sampling strategy for generating fast random features for kernel approximation.
Our method can significantly improve the generalization performance while achieving the same time complexity when compared to the standard RFF.
Our theoretical results and experimental validation have demonstrated the superiority of our method when compared to other representative random Fourier features based algorithms on several classification benchmark datasets.

\section*{Acknowledgments}
This work was supported in part by the National Natural Science Foundation of China (No.61572315, 61876107, 61977046), in part by the National Key Research and Development Project (No. 2018AAA0100702), in part by National Science Foundation grants CCF-1657420 and CCF-1704828, in part by the European Research Council under the European Union's Horizon 2020 research and innovation program / ERC Advanced Grant E-DUALITY (787960). This paper reflects only the
authors' views and the Union is not liable for any use that may be made
of the contained information; Research Council KUL C14/18/068;
Flemish Government FWO project GOA4917N; Onderzoeksprogramma Artificiele
Intelligentie (AI) Vlaanderen programme.
Jie Yang and Xiaolin Huang are corresponding authors.


\end{document}